\documentclass{article}
\usepackage{hyperref}
\usepackage{amsmath,amssymb,amsfonts,mathrsfs, amsthm}
\usepackage{mathtools}
\usepackage{amsthm}
\usepackage{bm}
\usepackage{fullpage}
\usepackage{graphicx}
\usepackage{subcaption}
\usepackage{natbib}
\usepackage{url, bbm}
\usepackage{cleveref}
\usepackage{enumitem}
\usepackage{color}
\usepackage{tikz}
\usepackage{tkz-graph}
\usepackage{authblk}
\usetikzlibrary{shapes.geometric}
\usetikzlibrary{backgrounds}
\usetikzlibrary{arrows.meta}
\usetikzlibrary{positioning}
\usepackage[framemethod=TikZ]{mdframed}

\usepackage{algorithm,algpseudocode}

\newtheorem{theorem}{Theorem}[section]
\newtheorem{proposition}{Proposition}[section]
\newtheorem{lemma}{Lemma}[section]
\newtheorem{corollary}{Corollary}[section]

\newtheorem{definition}{Definition}[section]

\newtheorem{assumption}{Assumption}

\title{Identifying Latent Actions and Dynamics from Offline Data via Demonstrator Diversity}
\author{Felix Schur\footnote{felix.schur@stat.math.ethz.ch}\\
Department of Mathematics, ETH Zürich}

\begin{document}
\newcommand\norm[1]{\left\lVert#1\right\rVert}
\newcommand{\indep}{\perp \!\!\! \perp}
\newcommand{\rank}{\mathrm{rank}}

\maketitle

\begin{abstract}
Can latent actions and environment dynamics be recovered from offline trajectories when actions are never observed? We study this question in a setting where trajectories are action-free but tagged with demonstrator identity. We assume that each demonstrator follows a distinct policy, while the environment dynamics are shared across demonstrators and identity affects the next observation only through the chosen action. Under these assumptions, the conditional next-observation distribution $p(o_{t+1}\mid o_t,e)$ is a mixture of latent action-conditioned transition kernels with demonstrator-specific mixing weights. We show that this induces, for each state, a column-stochastic nonnegative matrix factorization of the observable conditional distribution. Using sufficiently scattered policy diversity and rank conditions, we prove that the latent transitions and demonstrator policies are identifiable up to permutation of the latent action labels. We extend the result to continuous observation spaces via a Gram-determinant minimum-volume criterion, and show that continuity of the transition map over a connected state space upgrades local permutation ambiguities to a single global permutation. A small amount of labeled action data then suffices to fix this final ambiguity. These results establish demonstrator diversity as a principled source of identifiability for learning latent actions and dynamics from offline RL data.
\end{abstract}

\section{Introduction}

Recent progress in machine learning has been driven in large part by the availability of massive internet-scale datasets. Large language models can be trained on web text because both the input and the prediction target are directly observed. Reinforcement learning, by contrast, has not benefited from web-scale pretraining in the same way. Although the internet contains enormous amounts of sequential data---gameplay videos, robot videos, screen recordings, and human demonstrations---these data typically do not come with the action labels needed to train a policy or a dynamics model in the standard way \cite{seo2022reinforcement,baker2022vpt}. This lack of action annotations is a central obstacle to scaling RL from passive data.

A more fundamental difficulty is that, from observation-only trajectories alone, it is generally impossible to tell whether a change from $o_t$ to $o_{t+1}$ was caused by an action or whether it is simply a random event in the environment. As a simple example, imagine a video of a television screen whose channel changes over time. From the video alone, we cannot tell whether the TV is broken and changes channels by itself, or whether an unseen agent is using a remote control. Both mechanisms can produce the same observed transitions. More generally, if we only observe the marginal conditional distribution $p(o_{t+1}\mid o_t)$ under a single behavior policy, then action choice and environment stochasticity are confounded: many different latent action spaces, policies, and transition kernels can induce exactly the same observed next-observation distribution. Thus, latent actions and action-conditioned dynamics are not identifiable from a single stream of observation-only data without additional structure.
We propose to resolve this ambiguity by exploiting demonstrator diversity. The intuition is again simple. If a channel switch happens with the same probability for all viewers, then it is natural to attribute it to the TV itself rather than to the viewer. But if the transition probabilities differ systematically across viewers, then this variation must be mediated by action choice, provided that viewer identity has no direct effect on the TV beyond the chosen action. In other words, demonstrator identity induces variation in the latent policy while the transition dynamics remain shared. If the demonstrator policies are sufficiently diverse, this variation is rich enough to make the latent actions and dynamics identifiable.
We formalize this idea in a setting where the action $a_t$ is unobserved, but the demonstrator identity $e$ is observed together with $(o_t,o_{t+1})$. We assume that each demonstrator follows a potentially different policy $\pi_e(a\mid o)$, while the environment dynamics $p(o'\mid o,a)$ are shared across demonstrators and identity affects the next observation only through the action. Under these assumptions, for each fixed $o$ the observable conditional distribution $p(\cdot\mid o,e)$ admits a mixture decomposition over latent actions,
\[
p(\cdot \mid o,e) = \sum_{a=1}^k p(\cdot \mid o,a)\,\pi_e(a\mid o).
\]
This yields a column-stochastic factorization of the observable conditionals into latent action-conditioned transition kernels and demonstrator-specific action probabilities.

Our main result shows that when the demonstrator policies are sufficiently diverse, this factorization becomes identifiable up to a permutation of the latent action labels. In the finite case, we cast the problem as a column-stochastic nonnegative matrix factorization and leverage minimum-volume identifiability results under a sufficiently scattered condition \citep{fu2018identifiability,huang2018hmm}. In the continuous case, we derive an analogous result based on minimizing the Gram determinant of the latent transition functions. We further show that if the transition map varies continuously over a connected observation space, then the state-wise permutation ambiguity cannot vary across observations and must therefore be global. Finally, a small amount of labeled action data is enough to fix this remaining symmetry completely.
More broadly, our results suggest a route toward scaling RL from passive sequential data: rather than treating heterogeneous demonstrators as nuisance variation, we can use their differences as a source of information. Demonstrator diversity makes it possible to distinguish what is truly random in the world from what is driven by latent decisions, and thereby to recover action semantics and dynamics from action-free offline data.

\section{Related Work}

A large body of work studies how to learn from demonstrations that do not contain action labels. Early \emph{imitation from observation} methods such as Behavioral Cloning from Observation (BCO) and Generative Adversarial Imitation from Observation (GAIfO) learn directly from state-only demonstrations, typically by fitting an inverse dynamics model or by matching state-transition occupancy measures \citep{torabi2018bco,torabi2019gaifo}. Closer to our setting, ILPO learns a discrete latent action space from observations alone and then aligns the learned latent actions with real environment actions using a small amount of interaction \citep{edwards2019imitating}. LAPO, which learns latent action policies and world models from videos and shows that meaningful action structure can often be recovered from dynamics alone \citep{schmidt2024lapo}. Our work is complementary to these approaches: rather than proposing a new empirical objective for latent action discovery, we ask when latent actions and action-conditioned dynamics are identifiable in principle from $(o_t,o_{t+1},e)$.

Several recent works aim to leverage the abundance of action-free trajectories while using only a small amount of labeled data. APV studies representation learning from action-free videos for downstream RL \citep{seo2022reinforcement}, while VPT uses a small amount of labeled gameplay to train an inverse dynamics model that labels large-scale internet videos for policy pretraining \citep{baker2022vpt}. Semi-supervised offline RL formalizes datasets that mix action-labeled and action-free trajectories and studies practical pipelines that infer missing actions before applying standard offline RL algorithms \citep{zheng2023ssorl}. Our setting is more restrictive in one sense and more ambitious in another: we do not assume access to an external action labeler for the main identification argument, and instead exploit demonstrator identity as a source of structure that can render the latent action model identifiable.

Our use of demonstrator identity is related to recent work showing that heterogeneous datasets can make otherwise impossible action-free learning problems tractable. Most relevant is CRAFT, which studies action-free offline learning from multiple agents with differing policies in Ex-BMDPs and shows that differences across datasets can enable recovery of controllable latent structure \citep{levine2025craft}. While that work focuses on representation learning of controllable state features, our goal is different: we aim to identify a discrete latent action alphabet together with the corresponding action-conditioned transition kernels. In this sense, we use demonstrator diversity not only to learn useful representations, but to identify the semantics of the hidden decisions that generated the data.

Our perspective is also closely related to ideas from causality. At a high level, demonstrator identity acts as a source of exogenous policy variation: it changes the distribution over actions while, under our exclusion assumption, affecting the next observation only through the chosen action. This is reminiscent of instrumental-variable reasoning and of invariant prediction across environments \citep{hartford2017deepiv,arjovsky2019irm}. Recent work has begun to import such ideas into offline RL, for example to identify confounded transition dynamics when a valid instrument is available \citep{chen2022instrumental, liao2024instrumental}. Our setting differs in that the treatment itself---the action---is latent, and identification proceeds through a structured mixture decomposition rather than a standard regression problem.

Technically, our results build on the literature on identifiable nonnegative matrix factorization. In particular, minimum-volume criteria together with sufficiently scattered conditions can identify stochastic factorizations up to permutation \citep{fu2018identifiability}. Related factorization ideas have also been used to establish identifiability in latent-variable models such as hidden Markov models \citep{huang2018hmm}. We adapt these tools to the RL setting by interpreting the observable next-observation distributions as mixtures over latent action-conditioned transitions, indexed by demonstrator identity. This connection yields a clean identifiability theory for latent actions and dynamics from offline, action-free trajectories.

\section{Problem Setting}

We study a Markovian environment with \emph{latent} discrete actions. Let $O_t \in \mathcal O$ denote the observation at time $t$, where $\mathcal O$ is a measurable space, and let
$
A_t \in \mathcal A \coloneqq [k] = \{1,\dots,k\}
$
denote an unobserved action, where the number of latent actions $k$ is assumed known. Each trajectory is generated by one of $m$ demonstrators, indexed by a random variable $E \in [m]$.
Given the current observation $O_t=o$ and demonstrator identity $E=e$, the demonstrator chooses a latent action according to a policy
$
A_t \sim \pi_e^*(\cdot \mid o),
$
and the environment evolves according to an action-conditioned transition kernel
$
O_{t+1} \sim p^*(\cdot \mid o, A_t).
$
Actions are never observed. The offline data therefore consist only of tuples
$
(o_t, o_{t+1}, e),
$
drawn from trajectories generated by multiple demonstrators. We visualize this setting in \Cref{fig:scm}.

\begin{figure}
    \centering
    \begin{tikzpicture}[
          node distance=2cm and 2.2cm,
          ov/.style = {circle,draw,minimum size=3em},
          uv/.style = {rectangle,draw,dashed,minimum size=2.7em},
          edge/.style = {->, thick, >=Stealth},
          ]
      \node[ov] (Ot) at (0,0) {$O_t$};
      \node[ov] (Ot1) at (4,0) {$O_{t+1}$};
      \node[uv] (At) at (2,2) {$A_t$};
      \node[ov] (E) at (0,2) {$E$};

      \draw[edge] (Ot) -- (Ot1);
      \draw[edge] (Ot) -- (At);
      \draw[edge] (E) -- (At);
      \draw[edge] (At) -- (Ot1);
    \end{tikzpicture}
    \caption{Graphical model. Demonstrator identity $E$ affects the next observation $O_{t+1}$ only through the latent action $A_t$, while the current observation $O_t$ affects both action choice and the next observation.}
    \label{fig:scm}
\end{figure}
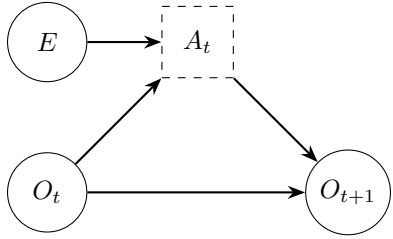

\paragraph{Observable and latent objects.}
At the population level, the observable object is the family of conditional distributions
$
\{p^*(\cdot \mid o,e) : o \in \mathcal O,\ e \in [m]\}.
$
The latent objects are the action-conditioned transition kernels and the demonstrator-specific policies
$
\{p^*(\cdot \mid o,a) : o \in \mathcal O,\ a \in [k]\}
$
and
$
\{\pi_e^*(a\mid o) : e \in [m],\ a \in [k],\ o \in \mathcal O\}.
$
By marginalizing over the latent action, for every $o \in \mathcal O$ and $e \in [m]$,
\begin{equation}
\label{eq:mixture}
p^*(\cdot \mid o,e)
=
\sum_{a=1}^k p^*(\cdot \mid o,a)\,\pi_e^*(a\mid o).
\end{equation}
Thus, for each fixed observation $o$, the demonstrator-conditioned next-observation law is a mixture of latent action-conditioned transition kernels, with mixing weights given by the demonstrator-specific policy.

\paragraph{Goal.}
Our goal is to recover the latent transition kernel and the demonstrator-specific policies
\[
p^*(\cdot \mid o,a), \qquad
\pi_e^*(a\mid o)
\]
from the observable family $\{p^*(\cdot \mid o,e)\}$. Since the labels of the latent actions are arbitrary, recovery can at best hold up to a permutation of the action indices.
We now state the assumptions used throughout.
\begin{assumption}[Structural assumptions]
\label{ass:basic}
Assume the following.
\begin{enumerate}[label=(\roman*)]
    \item \textbf{Sufficiency / Markov property:}
    $
    O_{t+1} \indep (O_{t-1},A_{t-1},O_{t-2},A_{t-2},\dots) \mid (O_t, A_t).
    $

    \item \textbf{Exclusion restriction:}
    $
    E \indep O_{t+1} \mid (O_t, A_t),
    $
    that is, demonstrator identity affects the next observation only through the latent action.

    \item \textbf{Well-defined latent transitions:}
    For every $o \in \mathcal O$ and every $a \in [k]$ such that
    $
    \pi_e^*(a\mid o) > 0
    \quad \text{for some } e \in [m],
    $
    the conditional distribution $p^*(\cdot \mid o,a)$ is well-defined.
\end{enumerate}
\end{assumption}

\section{Theoretical Guarantees}

Our analysis proceeds in four steps. First, we show that without demonstrator diversity the problem is non-identifiable in general. Second, we establish \emph{statewise} identifiability in finite observation spaces using identifiable stochastic (non-negative matrix factorization) NMF. Third, we give a extension to continuous observation spaces. Fourth, we show that continuity and a no-collision condition upgrade statewise permutations to a \emph{single global permutation}, which can then be fixed by a small amount of labeled action data.

We begin by formalizing the fact that observation-only data from a single behavior policy are not sufficient to identify latent actions and dynamics.

\begin{proposition}[Non-identifiability with a single demonstrator]
\label{prop:nonid-single}
Fix $o \in \mathcal O$. Suppose only the marginal next-observation law $p^*(\cdot \mid o)$ is observed, without demonstrator identity. Then in general the decomposition
\[
p^*(\cdot \mid o) = \sum_{a=1}^k p^*(\cdot \mid o,a)\,\pi^*(a\mid o)
\]
is not identifiable: there exist distinct pairs of latent transitions and action probabilities, not related by permutation, that induce the same observable distribution.
\end{proposition}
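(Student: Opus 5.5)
The plan is to prove the statement by an explicit counterexample. Since the claim is that the decomposition is not identifiable ``in general,'' it suffices to exhibit one observable law $p^*(\cdot\mid o)$ together with two mixture representations of it that are not related by a permutation of the action labels. I will take $k=2$ and a finite next-observation space $\mathcal O'=\{1,2,3\}$, so that each transition kernel is a probability vector in the simplex $\Delta^2$.

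\textbf{Step 1: choose the first representation.} Take the vertex distributions $p(\cdot\mid o,1)=\delta_1$ and $p(\cdot\mid o,2)=\delta_2$ with uniform policy $\pi(1\mid o)=\pi(2\mid o)=1/2$. This gives the observable vector $q=(1/2,1/2,0)$.

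\textbf{Step 2: construct a second representation.} Choose two other points of the simplex whose segment contains $q$ with nonzero weights. For example, set $\tilde p(\cdot\mid o,1)=(3/4,1/4,0)$ and $\tilde p(\cdot\mid o,2)=(1/4,3/4,0)$, again with uniform $\tilde\pi$. Then
\[
\tfrac12(3/4,1/4,0)+\tfrac12(1/4,3/4,0)=(1/2,1/2,0)=q .
\]
To also vary the policy, one could instead take $\tilde p(\cdot\mid o,1)=\delta_1$ and $\tilde p(\cdot\mid o,2)=(1/3,2/3,0)$ with $\tilde\pi(2\mid o)=3/4$. A quick check: $\tfrac14\delta_1+\tfrac34(1/3,2/3,0)=(1/2,1/2,0)$.

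\textbf{Step 3: rule out permutation equivalence.} A relabeling would map the set $\{\delta_1,\delta_2\}$ onto $\{\tilde p(\cdot\mid o,1),\tilde p(\cdot\mid o,2)\}$. In the first alternative, neither new kernel is a vertex of the simplex, so no permutation can match the two sets. Assumption~\ref{ass:basic} holds in both models: the Markov and exclusion properties are trivial with a single demonstrator, and all kernels are well defined. A fully specified environment consistent with both representations can be built by defining the transitions at $o$ as above and anything fixed elsewhere.

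\textbf{Step 4: state the general mechanism.} I would add a remark explaining why this works beyond the example. Any point in the interior of the convex hull of the true kernels admits a continuum of decompositions: take any pair of points on a line through $q$ within $\Delta$, or, for general $k$ and $|\mathcal O'|$, rescale the kernels toward $q$ via $\tilde p_a=(1-\lambda)p_a+\lambda q$ with $\pi$ unchanged. This gives infinitely many non-equivalent solutions whenever $q$ is not itself a vertex, i.e.\ whenever at least two kernels are distinct with positive weight. There is no real obstacle here; the only point that needs care is verifying that the alternative is not merely a permutation, which the vertex argument settles.
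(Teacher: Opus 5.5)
Your proposal is correct and is essentially the paper's proof: the paper uses exactly the pair of decompositions $\delta_1,\delta_2$ versus $(3/4,1/4)$, $(1/4,3/4)$ with uniform policy, only on the two-point space $\mathcal O=\{1,2\}$ instead of your three-point space with a zero third coordinate. Your extra contraction remark $\tilde p_a=(1-\lambda)p_a+\lambda q$ is a valid generalization as long as $\lambda\in(0,1)$ and some $p_a\neq q$, but the proposition does not need it.
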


Proposition~\ref{prop:nonid-single} shows that additional structure is necessary. In our setting, demonstrator identity provides this structure by inducing observable variation in action probabilities while keeping the transition kernel fixed.

\subsection{Statewise identifiability in finite observation spaces}

We first consider the case where $\mathcal O$ is finite. Fix $o \in \mathcal O$. For each demonstrator $e \in [m]$, define
$
p_e^o \coloneqq p^*(\cdot \mid o,e) \in \mathbb R^{|\mathcal O|},
$ and $
t_a^o \coloneqq p^*(\cdot \mid o,a) \in \mathbb R^{|\mathcal O|},
$
and collect these into the matrices
$
P_o^*
\coloneqq
\begin{bmatrix}
p_1^o & \cdots & p_m^o
\end{bmatrix}
\in \mathbb R^{|\mathcal O| \times m},
$, $
T_o^*
\coloneqq
\begin{bmatrix}
t_1^o & \cdots & t_k^o
\end{bmatrix}
\in \mathbb R^{|\mathcal O| \times k},
$
and
$
\Pi_o^*
\coloneqq
\begin{bmatrix}
\pi_1^*(\cdot \mid o) & \cdots & \pi_m^*(\cdot \mid o)
\end{bmatrix}
\in \mathbb R^{k \times m}.
$
By \eqref{eq:mixture},
\begin{equation}
\label{eq:finite-factorization}
P_o^* = T_o^* \Pi_o^*.
\end{equation}
Each column of $P_o^*$, $T_o^*$, and $\Pi_o^*$ is a probability vector. Define the set of column-stochastic matrices
\[
\mathcal M_{r\times c}
\coloneqq
\left\{
M \in [0,1]^{r\times c}
\;\middle|\;
\mathbbm{1}^\top M = \mathbbm{1}^\top
\right\}.
\]
Then
$
P_o^* \in \mathcal M_{|\mathcal O|\times m},
$
$
T_o^* \in \mathcal M_{|\mathcal O|\times k},
$
and
$
\Pi_o^* \in \mathcal M_{k\times m}.
$
The finite-state identification problem is therefore: from the observable matrix $P_o^*$, recover the latent factors $(T_o^*, \Pi_o^*)$ up to permutation.
The factorization \eqref{eq:finite-factorization} is not unique in general: if $S \in \mathrm{GL}(k)$ is invertible, then
\[
P_o^* = T_o^* \Pi_o^* = (T_o^* S)(S^{-1}\Pi_o^*).
\]
Thus additional structure is needed to identify the latent factors. We use the sufficiently scattered condition of \citet{fu2018identifiability}, which ensures that among all feasible stochastic factorizations, the true one is the unique minimum-volume solution up to permutation.

\begin{definition}[Sufficiently scattered]
    Let $M \in \mathbb{R}^{u \times v}$ be nonnegative. Define
    $
    \mathrm{cone}(M) \coloneqq \{M\theta : \theta \ge 0\}
    $
    and 
    $
    C^* \coloneqq \{y : y^\top x \ge 0 \text{ for all } x \in C\}
    $
    for any cone $C$. For $v \ge 2$, define the second-order cones
    $
    \mathcal{C}_v
    \coloneqq
    \left\{
    x \in \mathbb{R}^v : \mathbbm{1}^\top x \ge \sqrt{v-1}\,\|x\|_2
    \right\}
    $
    and
    $
    \mathcal{C}_v^*
    \coloneqq
    \left\{
    x \in \mathbb{R}^v : \mathbbm{1}^\top x \ge \|x\|_2
    \right\}.
    $
    A matrix $H \in \mathcal{M}_{u\times v}$ is called \emph{sufficiently scattered} if
    \begin{equation*}
        \mathcal{C}_v \subseteq \mathrm{cone}(H^\top)
        \qquad\text{and}\qquad
        \mathrm{cone}(H^\top)^* \cap \mathrm{bd}(\mathcal{C}_v^*)
        =
        \{\lambda e_j : j \in [v], \ \lambda \ge 0\}.
    \end{equation*}
\end{definition}

\begin{assumption}[Finite-state identifiability conditions]
\label{ass:finite-id}
For all $o \in \mathcal O$ we have $|\mathcal O| \ge k$ and $m \ge k$ and 
\begin{enumerate}[label=(\roman*)]
    \item $\rank(P_o^*) = k$;
    \item $\Pi_o^*$ is sufficiently scattered.
\end{enumerate}
\end{assumption}
Intuitively, the sufficiently scattered condition means that the demonstrator policy vectors are not all small perturbations of one another or concentrated near a low-dimensional subset of the simplex; rather, they point in sufficiently many different action-mixture directions that their conic hull spreads broadly through the positive orthant, making the latent actions geometrically separable. In this sense it formalizes \emph{policy diversity}: different demonstrators must place meaningfully different relative masses on the latent actions, so that each latent action leaves a distinct signature in the observed mixtures instead of being confounded with the others.
We can now state the finite-state identifiability result.

\begin{theorem}[Statewise identifiability in finite observation spaces]
\label{thm:finite-ident}
    Assume \Cref{ass:finite-id}.
    Then for every $o \in \mathcal{O}$, the optimization problem
    \begin{equation}
    \label{eq:finite-minvol}
    \begin{aligned}
    \min_{T_o \in \mathcal{M}_{|\mathcal{O}|\times k},\, \Pi_o \in \mathcal{M}_{k\times m}}
    \quad & \det(T_o^\top T_o) \\
    \text{s.t.}\quad & P_o^* = T_o \Pi_o
    \end{aligned}
    \end{equation}
    recovers the latent factors up to a permutation of the action labels. More precisely, there exists a permutation matrix $\Sigma_o \in \{0,1\}^{k\times k}$ such that
    \[
    T_o = T_o^* \Sigma_o,
    \qquad
    \Pi_o = \Sigma_o^\top \Pi_o^*.
    \]
\end{theorem}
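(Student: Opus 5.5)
The plan is to reduce \Cref{thm:finite-ident} statewise to the minimum-volume identifiability theorem of \citet{fu2018identifiability}, and to prove the reduction step directly. Fix $o$ and write $P=P_o^*$, $T^*=T_o^*$, $\Pi^*=\Pi_o^*$. By \eqref{eq:finite-factorization}, $(T^*,\Pi^*)$ is feasible for \eqref{eq:finite-minvol}. By \Cref{ass:finite-id}(i), $k=\rank(P)\le \min(\rank T^*,\rank \Pi^*)\le k$, so $T^*$ has full column rank and $\Pi^*$ has full row rank. The same argument applies to any feasible pair $(T,\Pi)$: both $T$ and $\Pi$ have rank $k$, and hence $\det(T^\top T)>0$. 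Since $\mathrm{range}(T)=\mathrm{range}(P)=\mathrm{range}(T^*)$, there is a unique $S\in\mathrm{GL}(k)$ with $T=T^*S$. Then $T^*\Pi^*=T^*S\Pi$, and full column rank of $T^*$ gives $\Pi^*=S\Pi$, that is, $\Pi=S^{-1}\Pi^*$.

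Next I would translate the constraints into conditions on $S$. Column-stochasticity gives $\mathbbm 1^\top T^*S=\mathbbm 1^\top$, so $\mathbbm 1^\top S=\mathbbm 1^\top$. Writing $W=S^{-1}$, we then also have $\mathbbm 1^\top W=\mathbbm 1^\top$. Nonnegativity of $\Pi=W\Pi^*$ says that every row $w_i^\top$ of $W$ satisfies $(\Pi^*)^\top w_i\ge0$. Equivalently, $w_i\in\mathrm{cone}((\Pi^*)^\top)^*$. For the objective, $\det(T^\top T)=\det(S)^2\det(T^{*\top}T^*)$, so it suffices to show $|\det W|\le 1$, with equality only when $W$ is a permutation matrix. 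Then $T^*$ is optimal, every optimum has $S$ a permutation matrix $\Sigma_o$, and we get $T_o=T^*\Sigma_o$ and $\Pi_o=\Sigma_o^{-1}\Pi^*=\Sigma_o^\top\Pi^*$.

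The main step is the determinant bound, which is where the sufficiently scattered condition enters. From $\mathcal C_k\subseteq\mathrm{cone}((\Pi^*)^\top)$, taking duals gives $\mathrm{cone}((\Pi^*)^\top)^*\subseteq\mathcal C_k^*$, so each row satisfies $\mathbbm 1^\top w_i\ge\|w_i\|_2$. Hadamard's inequality then gives $|\det W|\le\prod_i\|w_i\|_2\le\prod_i \mathbbm 1^\top w_i$. By AM--GM this is at most $\bigl(\tfrac1k\sum_i\mathbbm 1^\top w_i\bigr)^k=\bigl(\tfrac1k\mathbbm 1^\top W\mathbbm 1\bigr)^k=1$, using $\mathbbm 1^\top W=\mathbbm 1^\top$.

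For the equality case, suppose $|\det W|=1$. Then every inequality in the chain is tight, so $\|w_i\|_2=\mathbbm 1^\top w_i$ for each $i$. This places each $w_i$ in $\mathrm{cone}((\Pi^*)^\top)^*\cap\mathrm{bd}(\mathcal C_k^*)$, which by the second scattered condition means $w_i=\lambda_i e_{j_i}$ with $\lambda_i\ge0$. Invertibility of $W$ forces the $j_i$ to be distinct and the $\lambda_i$ to be positive. The column sums being one then force $\lambda_i=1$, so $W$ is a permutation matrix. I expect the remaining care to be in checking the duality step for these closed convex cones and in confirming that all feasible pairs indeed have rank $k$, which is needed so that the objective never degenerates to zero.
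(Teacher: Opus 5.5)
Your proposal is correct, but it proves the theorem by a different route than the paper does.

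\textbf{The paper's route.} The paper's proof of this theorem uses Theorem~1 of \citet{fu2018identifiability} as a black box. It obtains $T_o = T_o^*\Sigma_o D_o$ and $\Pi_o = D_o^{-1}\Sigma_o^\top\Pi_o^*$ with $D_o$ diagonal. It then uses column-stochasticity of $T_o$ and $T_o^*$ (\Cref{lem:no-scaling}) to force $D_o = I_k$.

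\textbf{Your route.} You prove the minimum-volume statement from scratch. Every feasible pair is written as $T = T^*S$, $\Pi = S^{-1}\Pi^*$, with $\mathbbm 1^\top S = \mathbbm 1^\top$ built in from the start, so no scaling ambiguity ever arises. The bound $|\det S^{-1}|\le 1$ then follows from the dual-cone, Hadamard and AM--GM chain. That chain is exactly \Cref{lem:det-bound-cont-correct}, which the paper uses only for the continuous-space theorem. Your argument is therefore the finite case of that proof, with $\Phi$ the identity on $\mathbb R^{|\mathcal O|}$.

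\textbf{What each buys.} The citation route is shorter, but the cited optimization problem does not literally match \eqref{eq:finite-minvol}: there $W$ is unconstrained. One must therefore observe that the ground truth stays feasible in the smaller, column-stochastic set, and that the conclusion concerns optimal rather than arbitrary feasible factorizations. Your route is self-contained, shows where each half of the sufficiently scattered condition enters, and unifies the finite and continuous results.

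\textbf{A notational correction.} Under the paper's convention $\mathrm{cone}(M)=\{M\theta:\theta\ge 0\}$, the cone you need is $\mathrm{cone}(\Pi^*)\subseteq\mathbb R^k$, generated by the $m$ policy columns. It is not $\mathrm{cone}((\Pi^*)^\top)\subseteq\mathbb R^m$. The condition you actually use, $(\Pi^*)^\top w_i\ge 0$ together with $\mathcal C_k\subseteq\mathrm{cone}(\Pi^*)$, is the right one and matches the paper's lemma.

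\textbf{The duality step you flagged.} It needs only the trivial reversal $A\subseteq B\Rightarrow B^*\subseteq A^*$ and the standard fact that the dual of $\mathcal C_k$ is $\{x:\mathbbm 1^\top x\ge\|x\|_2\}$.
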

The minimum-determinant objective admits a simple geometric interpretation. Since
\[
\det(T_o^\top T_o)
\]
is the squared $k$-dimensional volume spanned by the columns of $T_o$, minimizing it selects, among all stochastic factorizations of $P_o^*$, the latent transition kernels that are least spread out while still explaining the demonstrator-conditioned transition laws. Intuitively, this is a conservative principle: we attribute to the latent actions only the variation that is forced by the observed diversity across demonstrators, and do not introduce more extreme latent action effects than necessary. If the demonstrators are sufficiently diverse, this still forces recovery of all action effects that are genuinely distinguishable from the data. On the other hand, if two latent actions induce the same transition law at state $o$, then they are observationally equivalent there and cannot be identified separately; in that case the true factorization becomes rank-deficient ($\rank(P_o^*) < k$) and $\det((T_o^*)^\top T_o^*)=0$. Nevertheless, if one reduces the problem to the effective dimension
$
r_o \coloneqq \operatorname{rank}(P_o^*),
$
which under sufficient policy diversity coincides with the number of distinct transition effects at $o$, then the same minimum-volume argument identifies the remaining distinct latent transitions (and the corresponding aggregated policy masses) up to permutation.

Theorem~\ref{thm:finite-ident} is a \emph{statewise} result: the permutation $\Sigma_o$ may still depend on $o$. The next subsections address both continuous observation spaces and alignment of these statewise permutations across $o$.

\subsection{Continuous-space identifiability via embedded Gram-determinant}

We now give a continuous-state analogue of the finite-state minimum-volume result that applies to \emph{general transition measures}, including deterministic transitions represented by Dirac masses.

Let $(\mathcal O,\mathcal B)$ be a measurable space, let $\mathcal P(\mathcal O)$ denote the set of probability measures on $(\mathcal O,\mathcal B)$, and let $\mathcal M(\mathcal O)$ denote the vector space of finite signed measures on $(\mathcal O,\mathcal B)$.
Fix a real Hilbert space $\mathcal H$ and a linear injective map
$
\Phi : \mathcal M(\mathcal O) \to \mathcal H.
$
A canonical example is a kernel mean embedding into an RKHS associated with a bounded characteristic kernel. Injectivity ensures that equality in embedding space implies equality of the underlying measures.
Fix $o \in \mathcal O$. Define the latent transition measures
$
t_a^o \coloneqq p^*(\cdot \mid o,a) \in \mathcal P(\mathcal O),
a \in [k],
$
and the observable demonstrator-conditioned transition measures
$
p^o_e \coloneqq p^*(\cdot \mid o,e) \in \mathcal P(\mathcal O),
e \in [m].
$
By the mixture identity,
$
p^o_e
=
\sum_{a=1}^k \pi_e^*(a)\,t_a^o,
e \in [m].
$
Applying the linear map $\Phi$ yields
\[
\Phi(p^o_e)
=
\sum_{a=1}^k \pi_e^*(a)\,\Phi(t_a^o).
\]
For any $k$-tuple of measures $(t_1,\dots,t_k)$, define the embedded Gram matrix
\[
G_\Phi(t_1,\dots,t_k)
\coloneqq
\big[
\langle \Phi(t_i), \Phi(t_j)\rangle_{\mathcal H}
\big]_{i,j=1}^k.
\]
Its determinant measures the squared volume spanned by the embedded latent transition measures in $\mathcal H$.
We also define the rank of a family of embedded observables by
$
\rank(P^*)
\coloneqq
\dim \operatorname{span}\{\Phi(p^o_1),\dots,\Phi(p^o_m)\}
$. Note that $
\Pi_o^*
=
\begin{bmatrix}
\pi_1^*(\cdot \mid o) & \cdots & \pi_m^*(\cdot \mid o)
\end{bmatrix}
\in \mathbb R^{k \times m}
$ remains a real-valued matrix and therefore the sufficiently scattered condition does not change.

\begin{assumption}[Continuous-state identifiability conditions]
\label{ass:cont-id}
For all $o \in \mathcal O$ we have $|\mathcal O| \ge k$ and $m \ge k$ and 
\begin{enumerate}[label=(\roman*)]
    \item $\rank\left(G_\Phi(t^o_1,\dots,t^o_k)\right) = k$;
    \item $\Pi_o^*$ is sufficiently scattered.
\end{enumerate}
\end{assumption}
We now state the continuous-state identifiability result.

\begin{theorem}[State-wise identifiability in continuous observation spaces]
\label{thm:continuous-ident-direct-correct}
Fix $o \in \mathcal O$ and suppress the dependence on $o$ in the notation.
Assume \Cref{ass:cont-id} and that the optimization problem
    \begin{equation}
    \label{eq:continuous-minvol-measure}
    \begin{aligned}
    \min_{\bar{t}_1,\dots,\bar{t}_k,\Pi}
    \quad & \det\big(G_\Phi(\bar{t}_1,\dots,\bar{t}_k)\big) \\
    \text{s.t.}\quad
    & p^o_e = \sum_{a=1}^k \pi_e(a)\bar{t}_a,
    \qquad e \in [m], \\
    & \Pi_o \in \mathcal M_{k\times m}, \\
    & \bar{t}_a \in \mathcal P(\mathcal O),
    \qquad a \in [k],
    \end{aligned}
    \end{equation}
admits an optimizer.
Then every optimizer of \eqref{eq:continuous-minvol-measure} is equal to the ground-truth factorization up to permutation: there exists a permutation matrix $\Sigma_o \in \{0,1\}^{k\times k}$ such that
$
\bar{t}_a = t^o_{\Sigma(a)},
a \in [k],
$
and
\[
\Pi_o = \Sigma_o^\top \Pi_o^*.
\]
\end{theorem}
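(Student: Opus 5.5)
We reduce the measure-valued problem to the finite-dimensional Fu et al.\ minimum-volume argument by working inside the span of the true embedded transitions. Write $\tau_a \coloneqq \Phi(t^o_a)\in\mathcal H$ and $\bar\tau_a\coloneqq\Phi(\bar t_a)$. By Assumption~\ref{ass:cont-id}(i) the $\tau_a$ are linearly independent, so $V\coloneqq\operatorname{span}\{\tau_1,\dots,\tau_k\}$ is $k$-dimensional. Since $\Pi_o^*$ is sufficiently scattered, it has full row rank $k$: the cone $\mathcal C_k$ has nonempty interior and lies in $\mathrm{cone}((\Pi_o^*)^\top)$. Hence $W\coloneqq\operatorname{span}\{\Phi(p^o_e)\}_e = V$.

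For any feasible $(\bar t,\Pi)$, the constraint gives $\Phi(p^o_e)=\sum_a\pi_e(a)\bar\tau_a$, so $V\subseteq\operatorname{span}\{\bar\tau_a\}$. If the $\bar\tau_a$ are linearly dependent, then $\det G_\Phi(\bar t)=0<\det G_\Phi(t^o)$, yet their span, which contains $V$, would have dimension less than $k$, which is impossible. So they are independent, and the two spans coincide with $V$.

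Fix an orthonormal basis of $V$ and write coordinates. Let $T^*\in\mathbb R^{k\times k}$ have columns $\tau_a$, $\bar T$ have columns $\bar\tau_a$, and $P$ have columns $\Phi(p^o_e)$. Then $P=T^*\Pi^*=\bar T\Pi$, and by injectivity of $\Phi$ the problem is equivalent to its embedded version. Since both $T^*$ and $\bar T$ are invertible, there is $S\in\mathrm{GL}(k)$ with $\bar T=T^*S$ and $\Pi=S^{-1}\Pi^*$. The objective becomes $\det(\bar T^\top\bar T)=\det(T^*)^2\det(S)^2$, so optimality of $(\bar t,\Pi)$ together with feasibility of the ground truth forces $|\det S|\le 1$.

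The key step is to show $|\det S|\ge1$ with equality only for permutations, which is the Fu et al.\ lemma applied to $\Pi^*=S\Pi$ with both $\Pi,\Pi^*$ column-stochastic. To use the stochasticity constraint on $\bar t_a$, we need $\mathbbm 1^\top S=\mathbbm 1^\top$. Apply the total-mass functional $\mu\mapsto\mu(\mathcal O)$, which is linear on $\mathcal M(\mathcal O)$, to the identity $\bar t_a=\sum_b S_{ba}t^o_b$. This identity holds as measures by injectivity of $\Phi$, since both sides have the same embedding in $V$. It gives $1=\sum_b S_{ba}$.

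Let $M\coloneqq S^{-1}$, so $\Pi=M\Pi^*$ with $\Pi\ge0$ and $\mathbbm 1^\top M=\mathbbm 1^\top$. Then each row $m_i$ of $M$ satisfies $(\Pi^*)^\top m_i\ge0$, i.e.\ $m_i\in\mathrm{cone}((\Pi^*)^\top)^*\subseteq\mathcal C_k^*$, so $\|m_i\|_2\le\mathbbm 1^\top m_i$. Hadamard's inequality and AM--GM then give
\[
|\det M|\le\prod_i\|m_i\|_2\le\prod_i \mathbbm 1^\top m_i\le\Big(\tfrac1k\textstyle\sum_i\mathbbm 1^\top m_i\Big)^k=\Big(\tfrac{\mathbbm 1^\top M\mathbbm 1}{k}\Big)^k=1 .
\]
Thus $|\det S|\ge1$, and combined with the optimality bound, $|\det S|=1$. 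Equality throughout forces every $m_i$ onto $\mathrm{bd}(\mathcal C_k^*)$, and the second part of sufficient scatteredness then makes each $m_i=\lambda_i e_{j_i}$. Equality in Hadamard's inequality forces the rows to be orthogonal, so the $j_i$ are distinct, and the equality case of AM--GM makes all $\lambda_i$ equal. With the column-sum constraint this gives $\lambda_i=1$, so $M$ is a permutation matrix. Translating back through $\bar t_a=\sum_b S_{ba}t^o_b$ and injectivity of $\Phi$ yields $\bar t_a=t^o_{\Sigma(a)}$ and $\Pi_o=\Sigma_o^\top\Pi_o^*$.

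I expect the main obstacle to be the reduction step: justifying that every optimizer lies in the same $k$-dimensional span, and that the affine column-sum constraint on $S$ follows from $\bar t_a\in\mathcal P(\mathcal O)$. Both rest on the linearity and injectivity of $\Phi$ and on the mass functional being defined on the preimage. The equality-case analysis itself is routine, given the sufficiently scattered assumption.
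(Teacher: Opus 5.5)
Your proposal is correct and follows the paper's proof essentially step for step: the span/dimension comparison gives $\bar T = T^*S$ and $\Pi = S^{-1}\Pi^*$, the total-mass functional gives $\mathbbm 1^\top S = \mathbbm 1^\top$, and the Hadamard--AM--GM bound is combined with the sufficiently-scattered equality case; the paper isolates that bound as a separate determinant lemma, whereas you inline it and work in orthonormal coordinates on $V$ instead of with the maps $T^*,\bar T:\mathbb R^k\to\mathcal H$. One cosmetic slip: the cone you actually use is $\mathrm{cone}(\Pi_o^*)\subseteq\mathbb R^k$, generated by the columns of $\Pi_o^*$ with dual $\{y\in\mathbb R^k:(\Pi_o^*)^\top y\ge 0\}$, so under the definition $\mathrm{cone}(M)=\{M\theta:\theta\ge0\}$ writing it as $\mathrm{cone}((\Pi_o^*)^\top)$ is off by a transpose.
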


In finite-dimensional stochastic NMF, optimizer existence is often automatic by compactness. In contrast, \eqref{eq:continuous-minvol-measure} is an optimization problem over an infinite-dimensional space of measures, so minimizing sequences need not converge without additional compactness assumptions.
A standard sufficient route is to work on a compact Polish observation space and choose $\Phi$ to be weakly continuous (for example, a kernel mean embedding associated with a bounded continuous kernel). Then $\mathcal P(\mathcal O)$ is weakly compact, $\mathcal M_{k\times m}$ is compact, the feasibility constraints are closed, and the objective
$
(t_1,\dots,t_k) \mapsto \det\big(G_\Phi(t_1,\dots,t_k)\big)
$
is continuous, so an optimizer exists by the direct method.

\subsubsection{From state-wise to global identifiability}

The state-wise identifiability results identify the latent action model separately at each observation $o$, but only up to an observation-dependent permutation. We now show that, under continuity and connectedness, these local permutations must agree globally.

Throughout this subsection, let $\Phi : \mathcal M(\mathcal O) \to \mathcal H$ be the fixed injective linear embedding used in the continuous-space theorem, where $\mathcal H$ is a real Hilbert space.
For each $o \in \mathcal O$ and $a \in [k]$, define the embedded true latent transitions
$
\phi_a^*(o) \coloneqq \Phi\!\big(p^*(\cdot \mid o,a)\big) \in \mathcal H,
$
and, for any candidate latent model, define the embedded recovered latent transitions
$
\phi_a(o) \coloneqq \Phi\!\big(p(\cdot \mid o,a)\big) \in \mathcal H.
$
We collect these into the operator-valued maps
\[
T^*(o) \coloneqq \begin{bmatrix} \phi_1^*(o) & \cdots & \phi_k^*(o) \end{bmatrix},
\qquad
T(o) \coloneqq \begin{bmatrix} \phi_1(o) & \cdots & \phi_k(o) \end{bmatrix}.
\]

\begin{assumption}
\label{ass:global_indetn}
    Assume that:
    \begin{enumerate}[label=(\roman*)]
        \item for each $a \in [k]$, the maps $o \mapsto \phi_a(o)$ and $o \mapsto \phi_a^*(o)$ are continuous from $\mathcal O$ into $\mathcal H$;
        \item for every $o \in \mathcal O$ and every $i \neq j$,
        $
        \phi_i^*(o) \neq \phi_j^*(o).
        $
    \end{enumerate}
\end{assumption}

\begin{theorem}[From local to global permutations]
\label{thm:global-perm}
Assume that $\mathcal O$ is a connected metric space. Assume \Cref{ass:global_indetn}. Assume that for each $o \in \mathcal O$ there exists a permutation matrix $\Sigma(o)$ such that
\[
T(o) = T^*(o)\Sigma(o),
\qquad
\Pi(o) = \Sigma(o)^\top \Pi^*(o).
\]
Then $\Sigma(o)$ is constant on $\mathcal O$. That is, there exists a single permutation matrix $\Sigma$ such that
\[
T(o) = T^*(o)\Sigma,
\qquad
\Pi(o) = \Sigma^\top \Pi^*(o)
\qquad
\text{for all } o \in \mathcal O.
\]
\end{theorem}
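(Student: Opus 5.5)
The approach is to show that the map $o \mapsto \Sigma(o)$ is locally constant; connectedness of $\mathcal O$ then forces it to be constant. Since the set of $k\times k$ permutation matrices is finite, for each permutation $\sigma$ I define $U_\sigma \coloneqq \{o \in \mathcal O : \Sigma(o) = \sigma\}$. These sets partition $\mathcal O$. If each $U_\sigma$ is open, then by connectedness exactly one of them is nonempty, and it equals $\mathcal O$. There is one subtlety to address first: $\Sigma(o)$ is uniquely determined by the relation $T(o) = T^*(o)\Sigma(o)$. This holds because by Assumption~\ref{ass:global_indetn}(ii) the columns $\phi^*_1(o),\dots,\phi^*_k(o)$ are pairwise distinct. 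Each $\phi_a(o)$ therefore equals $\phi^*_{b}(o)$ for exactly one $b$, so the permutation is well defined as a function of $o$.

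The key step is openness. Fix $o_0$ with $\Sigma(o_0)=\sigma$, so that $\phi_a(o_0) = \phi^*_{\sigma(a)}(o_0)$ for all $a$. Let
\[
\delta \coloneqq \min_{i\neq j}\|\phi^*_i(o_0)-\phi^*_j(o_0)\|_{\mathcal H} > 0,
\]
which is positive by Assumption~\ref{ass:global_indetn}(ii) and finiteness of $[k]$. By continuity of all $2k$ maps $\phi_a,\phi^*_a$ (Assumption~\ref{ass:global_indetn}(i)), there is a metric ball $B$ around $o_0$ on which every $\phi_a(o)$ lies within $\delta/3$ of $\phi_a(o_0)$ and every $\phi^*_b(o)$ lies within $\delta/3$ of $\phi^*_b(o_0)$. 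Take $o\in B$ and suppose $\Sigma(o)=\tau$, so that $\phi_a(o)=\phi^*_{\tau(a)}(o)$. The triangle inequality then gives
\[
\|\phi^*_{\tau(a)}(o_0)-\phi^*_{\sigma(a)}(o_0)\|
\le \|\phi^*_{\tau(a)}(o_0)-\phi^*_{\tau(a)}(o)\| + \|\phi_a(o)-\phi_a(o_0)\| < 2\delta/3 < \delta,
\]
using $\phi^*_{\tau(a)}(o)=\phi_a(o)$ and $\phi_a(o_0)=\phi^*_{\sigma(a)}(o_0)$. By the definition of $\delta$, this forces $\tau(a)=\sigma(a)$ for every $a$, so $B\subseteq U_\sigma$.

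Once $\Sigma$ is constant, the relation $\Pi(o)=\Sigma^\top\Pi^*(o)$ follows immediately from the hypothesis with $\Sigma(o)=\Sigma$. No extra argument is needed for the policies, since the hypothesis already ties $\Pi(o)$ to the same $\Sigma(o)$.

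I expect the main (mild) obstacle to be the bookkeeping in the openness step: the $\delta/3$ margins must be chosen uniformly over the finitely many indices, and the distinctness assumption is only needed at the base point $o_0$, where continuity transfers it to the neighbourhood. Everything else is routine point-set topology: a partition of a connected space into finitely many open sets is trivial.
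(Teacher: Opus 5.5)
Your proof is correct and follows essentially the same route as the paper. The paper also uses a positive separation $\delta$ at a base point $o_0$ together with continuity to show $\Sigma(\cdot)$ is locally constant, then invokes connectedness; your $\delta/3$ margins, comparison at $o_0$ rather than at $o$, and explicit note that distinctness makes $\Sigma(o)$ uniquely determined (which the paper leaves implicit) are only cosmetic differences.
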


Theorem~\ref{thm:global-perm} shows that once the latent model is identified at each observation up to permutation, continuity and connectedness reduce the ambiguity to a single \emph{global} permutation of the latent action labels.
We now combine the state-wise identifiability theorem with Theorem~\ref{thm:global-perm} to obtain global identifiability.

\begin{assumption}[Global identifiability conditions]
\label{ass:main}
Assume the following.
\begin{enumerate}[label=(\roman*)]
    \item \label{ass:main:conn} $\mathcal O$ is a connected metric space.
    \item \label{ass:main:ss} For every $o \in \mathcal O$, the policy matrix $\Pi^*(o) \in \mathcal M_{k\times m}$ is sufficiently scattered.
    \item \label{ass:main:embed} For every $o \in \mathcal O$, $\rank\left(G_\Phi(t^o_1,\dots,t^o_k)\right) = k$.
    \item \label{ass:main:cont} For each $a \in [k]$, the map
    $
    o \mapsto \Phi\!\big(p^*(\cdot \mid o,a)\big)
    $
    is continuous from $\mathcal O$ into $\mathcal H$.
    \item \label{ass:main:anchor} There exists a nonempty subset $S \subseteq \mathcal O$ on which the global permutation is known.
\end{enumerate}
\end{assumption}

We give some intuition about the \Cref{ass:main}: Connectedness rules out choosing different action labelings on disconnected regions of the observation space and continuity prevents the local permutation ambiguity from changing abruptly across nearby observations (see \Cref{thm:global-perm}).
The sufficient-scattering condition guarantees state-wise identifiability by making the demonstrator policies diverse enough to separate the latent action-conditioned transition kernels and the positive-definiteness of the embedded Gram matrix ensures that the latent action effects are linearly independent, and hence distinguishable, at each observation (see \Cref{thm:continuous-ident-direct-correct}).

\begin{corollary}[Global identifiability]
\label{cor:main}
Assume \Cref{ass:main}(i)-(iv) hold. Let
$
o \mapsto \big(p(\cdot \mid o,1),\dots,p(\cdot \mid o,k),\Pi(o)\big)
$
be any continuous family such that for every $o \in \mathcal O$:
\begin{enumerate}[label=(\alph*)]
    \item the tuple
    $
    \big(p(\cdot \mid o,1),\dots,p(\cdot \mid o,k),\Pi(o)\big)
    $
    is feasible for \eqref{eq:continuous-minvol-measure}, and
    \item it attains the minimum in \eqref{eq:continuous-minvol-measure}.
\end{enumerate}
Then there exists a single permutation matrix $\Sigma$ such that
\[
p(\cdot \mid o,a) = p^*(\cdot \mid o,\Sigma(a)),
\qquad
\Pi(o) = \Sigma^\top \Pi^*(o)
\qquad
\text{for all } o \in \mathcal O.
\]
If, in addition, \Cref{ass:main}(v) holds, then $\Sigma = I_k$, and therefore
\[
p(\cdot \mid o,a) = p^*(\cdot \mid o,a),
\qquad
\Pi(o) = \Pi^*(o)
\qquad
\text{for all } o \in \mathcal O,\ a \in [k].
\]
In particular, the latent action-conditioned transition kernels and the demonstrator-specific policies are globally identifiable.
\end{corollary}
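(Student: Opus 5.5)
The plan is to chain the two earlier results: \Cref{thm:continuous-ident-direct-correct} applied pointwise, then \Cref{thm:global-perm} to glue the pointwise permutations together, and finally the anchor set to pin down the global permutation.

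\emph{Step 1 (statewise).} Fix $o \in \mathcal O$. \Cref{ass:main}\ref{ass:main:ss} and \ref{ass:main:embed} give exactly \Cref{ass:cont-id} at $o$. The problem \eqref{eq:continuous-minvol-measure} admits an optimizer, because by hypothesis (b) the given tuple attains the minimum. Hence \Cref{thm:continuous-ident-direct-correct} yields a permutation matrix $\Sigma(o)$ with $p(\cdot\mid o,a)=p^*(\cdot\mid o,\Sigma(a))$ and $\Pi(o)=\Sigma(o)^\top\Pi^*(o)$. Applying $\Phi$ turns this into $T(o)=T^*(o)\Sigma(o)$ in the notation of \Cref{thm:global-perm}.

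\emph{Step 2 (verify the hypotheses of \Cref{thm:global-perm}).}
\begin{itemize}
\item Connectedness is \Cref{ass:main}\ref{ass:main:conn}.
\item Continuity of $\phi_a^*$ is \Cref{ass:main}\ref{ass:main:cont}.
\item Continuity of $\phi_a$ holds because the candidate family is assumed continuous. I read this as continuity of $o\mapsto\Phi(p(\cdot\mid o,a))$ in $\mathcal H$, and would state that interpretation explicitly.
\item The no-collision condition \Cref{ass:global_indetn}(ii) must be derived, since it is not assumed directly. By \Cref{ass:main}\ref{ass:main:embed} the Gram matrix $G_\Phi(t_1^o,\dots,t_k^o)$ has rank $k$, so $\phi_1^*(o),\dots,\phi_k^*(o)$ are linearly independent and in particular pairwise distinct.
\end{itemize}
\Cref{thm:global-perm} then gives a constant $\Sigma$. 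Injectivity of $\Phi$ converts the equality $\phi_a(o)=\phi^*_{\Sigma(a)}(o)$ back into equality of measures, which is the first claim.

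\emph{Step 3 (anchor).} Under \Cref{ass:main}\ref{ass:main:anchor}, on the nonempty set $S$ the labeling is known to agree with the true labels. Interpreted as $p(\cdot\mid o,a)=p^*(\cdot\mid o,a)$ for some $o\in S$, this gives $\phi^*_{\Sigma(a)}(o)=\phi^*_a(o)$ for every $a$. By the pairwise distinctness from Step 2, this forces $\Sigma(a)=a$, so $\Sigma=I_k$. Substituting into Step 2 gives the final identities.

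The main obstacle is bookkeeping rather than depth. The points needing care are:
\begin{itemize}
\item deriving the no-collision condition from the Gram rank condition;
\item making precise the meaning of ``continuous family'' and of ``the global permutation is known on $S$'';
\item noting that the optimizer-existence hypothesis of \Cref{thm:continuous-ident-direct-correct} is supplied by (b).
\end{itemize}
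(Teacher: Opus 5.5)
Your proposal is correct and matches the paper's proof essentially step for step: you apply \Cref{thm:continuous-ident-direct-correct} pointwise, derive the no-collision condition from the Gram rank condition, invoke \Cref{thm:global-perm}, and then use the anchor set to fix $\Sigma$. You also note explicitly that hypothesis (b) supplies the optimizer-existence assumption, and that pairwise distinctness of the $\phi^*_a(o)$ is what forces $\Sigma = I_k$ from agreement at a single anchor point; both are small but welcome sharpenings of steps the paper leaves implicit.
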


Although the assumptions in \Cref{ass:main} are natural, one may object that in applications they need not hold \emph{uniformly} over all $o \in \mathcal O$. In particular, some assumptions may fail on a small subset of the observation space or only at isolated observations. The key point is that the identifiability result is often robust to such localized violations: even when global identifiability fails on all of $\mathcal O$, one can typically still recover identifiability on the subset where the assumptions remain valid.
More concretely, the consequences of violating the assumptions can be summarized as follows.
\begin{itemize}
    \item If \Cref{ass:main}(i) fails, then $\mathcal O$ has multiple connected components. In that case, the argument of \Cref{thm:global-perm} applies separately on each connected component, so the latent actions remain identifiable up to a \emph{component-wise} permutation. If \Cref{ass:main}(v) holds on each component, then these permutations are fixed and one still obtains full identifiability on all of $\mathcal O$.

    \item If \Cref{ass:main}(ii) fails at some observations $o \in A \subsetneq \mathcal O$, then the geometric condition needed for the state-wise minimum-volume argument may break down on $A$. At such observations, state-wise identifiability is no longer guaranteed. Nevertheless, if $\mathcal O \setminus A$ remains connected and \Cref{ass:main}(ii) holds there, then the global permutation is still uniquely determined on $\mathcal O \setminus A$.

    \item If \Cref{ass:main}(iii) fails at some $o \in \mathcal O$, then the embedded latent transitions become linearly dependent at that observation. In particular, the action-conditioned effects are no longer fully distinguishable there, so one cannot expect unique recovery of all latent actions at that state. This is an intrinsic non-identifiability phenomenon rather than a limitation of our results.

    \item If \Cref{ass:main}(iv) fails, then the local permutation need no longer vary continuously with $o$. As a result, different permutations may be selected in different regions of the observation space. However, if the recovered and true embedded transitions are continuous on each element of a partition of $\mathcal O$, then the conclusion of \Cref{thm:global-perm} still applies piecewise, yielding one permutation per region.

    \item If \Cref{ass:main}(v) fails, then the latent actions remain identifiable only up to a single global permutation. This is the unavoidable label-swapping ambiguity familiar from latent-variable models.
\end{itemize}

\section{Estimation}

The identifiability results above are stated at the population level and characterize the latent model as the minimum-volume factorization of the observable conditional laws subject to exact mixture constraints and sufficient policy diversity. In practice, however, we only observe finitely many triples
$
(o_t,o_{t+1},e_t),
$
and in continuous observation spaces it is not tractable to optimize directly over arbitrary transition measures and state-dependent policy matrices. We therefore consider a parametric estimation procedure that is motivated by the identification theory but replaces the exact constrained problem by a relaxed empirical objective.

We parameterize the latent action-conditioned transition model and the demonstrator-specific latent policy by neural networks
\[
p_\theta(\cdot \mid o,a),
\qquad
\pi_\psi(a\mid o,e),
\]
where $\theta$ and $\psi$ denote trainable parameters. The policy network outputs a probability vector over the $k$ latent actions, for example via a softmax layer, and the transition model outputs a conditional distribution over next observations. In continuous spaces, $p_\theta(\cdot\mid o,a)$ may be instantiated as a Gaussian head, a mixture density model, or a conditional latent-state model.
The observable next-observation law induced by the model is
\begin{equation}
\label{eq:est-mixture-model}
p_{\theta,\psi}(o' \mid o,e)
=
\sum_{a=1}^k
\pi_\psi(a\mid o,e)\, p_\theta(o' \mid o,a),
\end{equation}
which mirrors the population mixture representation in \eqref{eq:mixture}.
A natural estimation principle is maximum likelihood under the observable mixture model \eqref{eq:est-mixture-model}. Given a dataset
$
\mathcal D = \{(o_i,o_i',e_i)\}_{i=1}^n,
$
we define the empirical negative log-likelihood
\begin{equation*}
\label{eq:est-nll}
\mathcal L_{\mathrm{fit}}(\theta,\psi)
=
-\frac1n
\sum_{i=1}^n
\log
\Big(
\sum_{a=1}^k
\pi_\psi(a\mid o_i,e_i)\,
p_\theta(o_i' \mid o_i,a)
\Big).
\end{equation*}
Minimizing $\mathcal L_{\mathrm{fit}}$ alone is generally insufficient: the model can explain the data with collapsed latent actions or nearly identical demonstrator policies, even when the true latent structure is identifiable.

\paragraph{A minimum-volume regularizer for latent transitions.}
The continuous-space identifiability result suggests selecting, among all factorizations that explain the data well, the one with minimum embedded volume. Let $\Phi : \mathcal M(\mathcal O) \to \mathcal H$ be the injective linear embedding used in \Cref{thm:continuous-ident-direct-correct}. For a candidate transition model, define the embedded latent transitions
\[
\mu_{\theta,a}(o)
\coloneqq
\Phi\big(p_\theta(\cdot\mid o,a)\big)
\in \mathcal H,
\qquad a\in[k].
\]
Their embedded Gram matrix is
\[
G_\theta(o)
\coloneqq
\big[
\langle \mu_{\theta,a}(o),\mu_{\theta,b}(o)\rangle_{\mathcal H}
\big]_{a,b=1}^k.
\]
A practical surrogate for the population minimum-volume principle is then
\begin{equation*}
\label{eq:est-rvol}
\mathcal R_{\mathrm{vol}}(\theta)
=
\frac1n
\sum_{i=1}^n
\log\det\!\big(G_\theta(o_i)+\varepsilon I_k\big),
\end{equation*}
where $\varepsilon>0$ is a small numerical regularization constant. Minimizing $\mathcal R_{\mathrm{vol}}$ encourages the latent transition family to be as simple and compact as possible while still explaining the observed data.

\paragraph{Preventing policy collapse.}
In the identifiability theory, demonstrator diversity is not an objective to be maximized, but a condition on the feasible factorizations: the true policy matrix must be sufficiently scattered. A direct practical analogue is therefore not to reward arbitrarily large diversity, but to penalize \emph{degenerate} policy geometries in which the learned policy matrix becomes nearly low-rank or some latent actions become unused.
For a fixed observation $o$, define the learned policy matrix
\[
\Pi_\psi(o)
\coloneqq
\begin{bmatrix}
\pi_\psi(\cdot\mid o,1) & \cdots & \pi_\psi(\cdot\mid o,m)
\end{bmatrix}
\in \mathbb R^{k\times m}.
\]
To discourage collapse, we introduce a diversity barrier based on the log-determinant of the policy Gram matrix:
\begin{equation*}
\label{eq:est-rpol}
\mathcal R_{\mathrm{pol}}(\psi)
=
\frac1n
\sum_{i=1}^n
\Big[
\tau
-
\log\det\!\big(
\Pi_\psi(o_i)\Pi_\psi(o_i)^\top + \varepsilon I_k
\big)
\Big]_+,
\end{equation*}
where $[x]_+ = \max\{x,0\}$ and $\tau$ is a user-specified threshold. This term vanishes whenever the policy matrix is sufficiently well-conditioned, and becomes active only when demonstrator policies begin to collapse toward a low-dimensional or poorly separated configuration. In this way, it acts as a soft feasibility constraint rather than a competing objective.

\paragraph{Optional label anchoring.}
As in the theoretical development, the latent action labels are only identifiable up to permutation unless a small amount of side information is available. If a small labeled dataset
$
\mathcal D_{\mathrm{lab}}
=
\{(o_j,e_j,a_j^\star)\}_{j=1}^r
$
is available, then the residual global permutation ambiguity can be fixed by adding the supervision term
\begin{equation*}
\label{eq:est-anchor}
\mathcal L_{\mathrm{anchor}}(\psi)
=
-\frac1r
\sum_{j=1}^r
\log \pi_\psi(a_j^\star \mid o_j,e_j).
\end{equation*}

\paragraph{Final objective.}
Combining the terms above yields the estimator
\begin{equation}
\label{eq:est-objective}
\min_{\theta,\psi}
\quad
\mathcal L_{\mathrm{fit}}(\theta,\psi)
+
\lambda_{\mathrm{vol}}\,\mathcal R_{\mathrm{vol}}(\theta)
+
\lambda_{\mathrm{pol}}\,\mathcal R_{\mathrm{pol}}(\psi)
+
\lambda_{\mathrm{anchor}}\,\mathcal L_{\mathrm{anchor}}(\psi),
\end{equation}
where $\lambda_{\mathrm{vol}},\lambda_{\mathrm{pol}},\lambda_{\mathrm{anchor}}\ge 0$ are tuning parameters. The roles of the terms are complementary: $\mathcal L_{\mathrm{fit}}$ enforces agreement with the observed demonstrator-conditioned transitions, $\mathcal R_{\mathrm{vol}}$ selects a simple latent action representation, $\mathcal R_{\mathrm{pol}}$ keeps the learned policies in a sufficiently diverse regime, and $\mathcal L_{\mathrm{anchor}}$ resolves the final global label symmetry when labeled actions are available.

The objective \eqref{eq:est-objective} should be viewed as a practical surrogate for the population identification problem rather than as an exact finite-sample analogue of the theory. In particular, the minimum-volume principle in \Cref{thm:continuous-ident-direct-correct} is applied there over \emph{exactly feasible} factorizations, whereas \eqref{eq:est-objective} trades off data fit and regularization in finite samples. The policy regularizer is therefore important in practice: without it, the model may reduce the transition volume by collapsing distinct latent actions or by learning nearly identical demonstrator policies, even when the ground-truth factorization is identifiable. The estimator above is designed to preserve the geometric intuition of the theory: learn the simplest latent action-conditioned transition model that explains the data, while requiring the demonstrator policies to remain sufficiently diverse to reveal that latent structure.

\paragraph{Computing the embedded Gram matrix.}
The form of $G_\theta(o)$ depends on how the transition model $p_\theta(\cdot\mid o,a)$ is represented. If $p_\theta$ is an explicit density model on $\mathcal O$, such as a Gaussian or mixture-density head, and the embedding $\Phi$ is induced by a kernel $k$, then the Gram entries can be written as
\[
G_\theta(o)_{ab}
=
\langle \Phi(p_\theta(\cdot\mid o,a)),\Phi(p_\theta(\cdot\mid o,b))\rangle_{\mathcal H}
=
\mathbb E_{x\sim p_\theta(\cdot\mid o,a),\,y\sim p_\theta(\cdot\mid o,b)}[k(x,y)],
\]
which may be available in closed form for simple choices of $p_\theta$ and $k$, or else approximated by Monte Carlo samples. If the transition model is deterministic, say
$
p_\theta(\cdot\mid o,a)=\delta_{f_\theta(o,a)},
$
then
$
G_\theta(o)_{ab}
=
k(f_\theta(o,a),f_\theta(o,b)),
$
so the Gram matrix reduces to pairwise kernel evaluations between the predicted next observations. Finally, if the transition model is defined in a learned latent state space, for example by encoding observations as $z=h_\omega(o)$ and modeling $p_\theta(z'\mid z,a)$, then the same construction may be applied in latent space by replacing $\mathcal O$ with the latent space and computing the Gram matrix from the corresponding latent transition distributions. In all cases, the role of $G_\theta(o)$ is the same: it quantifies how distinct the action-conditioned transitions are at observation $o$, and its determinant serves as a measure of the effective volume of the learned latent action simplex.

\section{Conclusion}

We studied when latent actions and action-conditioned dynamics can be identified from offline trajectories in which actions are never observed but demonstrator identity is available. Our main insight is that demonstrator diversity can make the latent structure recoverable: when demonstrators induce sufficiently rich variation in latent action usage and the transition dynamics are shared, the observable next-observation laws admit an identifiable factorization up to permutation of the latent actions. We established this first in finite observation spaces via identifiable stochastic nonnegative matrix factorization, and then extended the result to continuous observation spaces through an embedded minimum-volume criterion. Under continuity and connectedness assumptions, the remaining statewise permutation ambiguity reduces to a single global permutation, which can be fixed with a small amount of labeled action data.
Beyond the identifiability results, our analysis suggests a practical principle for learning from action-free offline data: learn the simplest latent action-conditioned transition model consistent with the data while requiring demonstrator policies to remain sufficiently diverse to reveal that latent structure. We hope these results provide a useful theoretical foundation for future algorithms that leverage heterogeneous passive data for representation learning, model learning, and offline reinforcement learning without action annotations.

\bibliographystyle{abbrvnat}
\bibliography{refs}

\appendix

\newpage

\section{Proofs}

\subsection[g]{Proof of \Cref{prop:nonid-single}}
It suffices to give one example. Let $\mathcal O = \{1,2\}$ and $k=2$, and consider the observable distribution
\[
p^*(\cdot \mid o) = \begin{bmatrix} 1/2 \\ 1/2 \end{bmatrix}.
\]
One feasible decomposition is
\[
t_1 = \begin{bmatrix} 1 \\ 0 \end{bmatrix},
\qquad
t_2 = \begin{bmatrix} 0 \\ 1 \end{bmatrix},
\qquad
\pi^*(\cdot \mid o) = \begin{bmatrix} 1/2 \\ 1/2 \end{bmatrix},
\]
for which
\[
p^*(\cdot \mid o) = \tfrac12 t_1 + \tfrac12 t_2.
\]
A different feasible decomposition is
\[
\tilde t_1 = \begin{bmatrix} 3/4 \\ 1/4 \end{bmatrix},
\qquad
\tilde t_2 = \begin{bmatrix} 1/4 \\ 3/4 \end{bmatrix},
\qquad
\tilde\pi^*(\cdot \mid o) = \begin{bmatrix} 1/2 \\ 1/2 \end{bmatrix},
\]
for which
\[
p^*(\cdot \mid o) = \tfrac12 \tilde t_1 + \tfrac12 \tilde t_2.
\]
These two decompositions are not related by permutation. Hence the latent transitions and latent actions are not identifiable from a single observation-only conditional distribution in general.

\subsection[f]{Proof of \Cref{thm:finite-ident}}

\begin{theorem}[Theorem 1 of \citet{fu2018identifiability}]
\label{thm:fu}
Let $u,v,z \in \mathbb{N}$ with $u,z \ge v$. Suppose
\[
X^* \in \mathbb{R}^{u\times z},
\qquad
W^* \in \mathbb{R}^{u\times v},
\qquad
H^* \in \mathbb{R}^{v\times z}
\]
satisfy
\[
X^* = W^* H^*,
\qquad
\rank(X^*) = \rank(H^*) = v,
\]
and assume that $H^*$ is sufficiently scattered. Then the optimization problem
\begin{equation}
\label{eq:fu-optimization}
\begin{aligned}
    \min_{W \in \mathbb{R}^{u\times v}, H \in \mathbb{R}^{v\times z}}
    \quad & \det(W^\top W) \\
    \text{s.t.}\quad & X^* = WH\\
    \quad & H \mathbbm{1}^{\top} = \mathbbm{1}, H \geq 0
\end{aligned}
\end{equation}
has a unique solution up to permutation: there exists a permutation matrix $\Sigma \in \{0,1\}^{v\times v}$ and a full-rank permutation matrix $D \in \mathbb{R}^{v\times v}$ such that
\[
W = W^* \Sigma D,
\qquad
H = D^{-1} \Sigma^\top H^*.
\]
\end{theorem}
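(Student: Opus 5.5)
The plan is the standard minimum-volume argument: parametrize every feasible factorization by an invertible change of basis $A$, and show that the sufficiently scattered condition forces $|\det A|\le 1$, with equality only for permutations. I read the constraint $H\mathbbm 1^\top=\mathbbm 1$ as the column-stochastic constraint $\mathbbm 1^\top H=\mathbbm 1^\top$, consistent with $\Pi_o\in\mathcal M_{k\times m}$. I also assume $\mathbbm 1^\top H^* = \mathbbm 1^\top$, as is the case for $\Pi_o^*$; otherwise the ground truth would not be feasible. Under that normalization the matrix $D$ turns out to be the identity.

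\textbf{Step 1 (reduction to a change of basis).} Let $(W,H)$ be feasible. Since $X^*=W^*H^*=WH$ and $\rank(X^*)=v$, the matrices $W$ and $H$ have rank $v$. The row spaces of $H$ and $H^*$ both equal the row space of $X^*$. Hence there is an invertible $A\in\mathbb R^{v\times v}$ with $H=AH^*$. Substituting into $W^*H^*=WAH^*$ and using that $H^*$ has full row rank gives $W=W^*A^{-1}$.

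The column-sum constraints give $\mathbbm 1^\top AH^*=\mathbbm 1^\top=\mathbbm 1^\top H^*$. Full row rank of $H^*$ then yields $\mathbbm 1^\top A=\mathbbm 1^\top$, and therefore $\mathbbm 1^\top A\mathbbm 1=v$. The objective becomes $\det(W^\top W)=\det(W^{*\top}W^*)/\det(A)^2$. The ground truth corresponds to $A=I$ and is feasible, so any minimizer must satisfy $|\det A|\ge 1$.

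\textbf{Step 2 (volume bound from scattering).} Let $a_i^\top$ denote the $i$-th row of $A$. Nonnegativity $H\ge0$ says exactly that $H^{*\top}a_i\ge 0$, i.e.\ $a_i\in\mathrm{cone}(H^{*\top})^*$. Dualizing the inclusion $\mathcal C_v\subseteq\mathrm{cone}(H^{*\top})$ reverses it, so $\mathrm{cone}(H^{*\top})^*\subseteq\mathcal C_v^*$. Hence $\|a_i\|_2\le\mathbbm 1^\top a_i$ for every $i$.

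We then chain three inequalities:
\[
|\det A|\;\le\;\prod_{i=1}^v\|a_i\|_2\;\le\;\prod_{i=1}^v\mathbbm 1^\top a_i\;\le\;\Big(\tfrac1v\sum_{i=1}^v\mathbbm 1^\top a_i\Big)^v\;=\;\Big(\tfrac{\mathbbm 1^\top A\mathbbm 1}{v}\Big)^v\;=\;1.
\]
The first is Hadamard's inequality. The second uses Step 2's bound, and the third is AM--GM, valid because each $\mathbbm 1^\top a_i\ge\|a_i\|_2\ge 0$. Combined with Step 1, every minimizer has $|\det A|=1$, so equality holds throughout the chain.

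\textbf{Step 3 (equality case — the main point).} Equality in the second inequality gives $\|a_i\|_2=\mathbbm 1^\top a_i$ for each $i$, so $a_i\in\mathrm{bd}(\mathcal C_v^*)$. Together with $a_i\in\mathrm{cone}(H^{*\top})^*$, the second clause of the sufficiently scattered definition forces $a_i=\lambda_i e_{j_i}$ with $\lambda_i\ge0$. Equality in AM--GM forces all $\mathbbm 1^\top a_i$ to be equal, hence $\lambda_i=1$. Invertibility of $A$ forces the indices $j_i$ to be distinct. Thus $A=\Sigma^\top$ for a permutation matrix $\Sigma$, which gives $H=\Sigma^\top H^*$ and $W=W^*\Sigma$, i.e.\ the claim with $D=I$.

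The delicate step is exactly this equality analysis. It is the only place where the boundary clause of the scattering condition is used, and one must check that Hadamard, the cone bound and AM--GM are all tight at once rather than only their product. Two further points need care in Step 1: the identity $\mathbbm 1^\top A=\mathbbm 1^\top$ relies on full row rank of $H^*$, and the nonnegativity of each $\mathbbm 1^\top a_i$ is what makes AM--GM applicable.
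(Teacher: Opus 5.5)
Your argument is correct. The paper itself only cites this result from Fu et al., but your proof is exactly the argument the paper gives for Lemma~\ref{lem:det-bound-cont-correct} and Theorem~\ref{thm:continuous-ident-direct-correct}: change of basis, rows of $A$ in the dual cone, Hadamard plus the cone bound plus AM--GM, and the boundary clause for the equality case. The only difference is that you obtain $\mathbbm 1^\top A=\mathbbm 1^\top$ from the normalization of $H$ and full row rank of $H^*$, rather than from the total mass of the transition side, which you must do since $W$ is unconstrained here.

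Your normalization $\mathbbm 1^\top H^*=\mathbbm 1^\top$ is not actually an extra hypothesis, because the paper defines sufficiently scattered only for matrices in $\mathcal M$; hence $D=I$ is forced. One notational correction: the cone you dualize should be $\operatorname{cone}(H^*)\subseteq\mathbb R^v$, generated by the columns of $H^*$, not $\operatorname{cone}(H^{*\top})$. That transposition is inherited from the paper's definition.
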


\begin{lemma}[Column-stochasticity removes diagonal scaling]
\label{lem:no-scaling}
Let $T,T' \in \mathcal M_{u\times k}$ and $\Pi,\Pi' \in \mathcal M_{k\times m}$ satisfy
\[
T' = T D,
\qquad
\Pi' = D^{-1}\Pi
\]
for some invertible diagonal matrix $D \in \mathbb R^{k\times k}$. Then $D = I_k$.
\end{lemma}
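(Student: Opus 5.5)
The plan is to use only the column-sum constraint on $T$ and $T'$; the relation $\Pi' = D^{-1}\Pi$ will then hold automatically. Write $D = \mathrm{diag}(d_1,\dots,d_k)$ with every $d_j \neq 0$, since $D$ is invertible. Because $T \in \mathcal M_{u\times k}$, we have $\mathbbm{1}^\top T = \mathbbm{1}^\top$. Right-multiplying by $D$ gives
\[
\mathbbm{1}^\top T' = \mathbbm{1}^\top T D = \mathbbm{1}^\top D = (d_1,\dots,d_k).
\]
Because $T' \in \mathcal M_{u\times k}$ as well, the left-hand side equals $\mathbbm{1}^\top$. Comparing entries gives $d_j = 1$ for every $j \in [k]$, so $D = I_k$.

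Equivalently, column by column: the $j$-th column of $T'$ is $d_j$ times the $j$-th column of $T$. Both columns are probability vectors, so their entries each sum to one. Summing entries gives $1 = d_j \cdot 1$.

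The stochasticity of $\Pi$ and $\Pi'$ is not needed. One could run the symmetric argument with row operations, but $\Pi$ is column-stochastic, so $D^{-1}$ rescales its rows rather than its columns. The column-sum argument therefore does not apply to $\Pi$ directly; this is why I work with $T$.

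I expect no real obstacle. The one point to check is that every column of $T$ has total mass one, so that it is nonzero and its sum pins down $d_j$. Column-stochasticity guarantees this.

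In the proof of \Cref{thm:finite-ident}, the lemma is applied to the output of \Cref{thm:fu}: after absorbing the permutation $\Sigma$, the diagonal factor $D$ there is forced to be $I_k$.
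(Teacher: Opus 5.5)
Your proof is correct and is the same as the paper's: both compute $\mathbbm{1}^\top T' = \mathbbm{1}^\top T D = \mathbbm{1}^\top D$ and use column-stochasticity of $T'$ to read off $d_a = 1$ for every $a$. Your remark that the stochasticity of $\Pi,\Pi'$ is never used also matches the paper, whose argument relies only on $T$ and $T'$.
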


\begin{proof}
Write $D=\mathrm{diag}(d_1,\dots,d_k)$. Since $T$ and $T'$ are column-stochastic,
\[
\mathbbm{1}^\top T' = \mathbbm{1}^\top T D = \mathbbm{1}^\top D = \mathbbm{1}^\top.
\]
Hence $d_a = 1$ for every $a \in [k]$, so $D=I_k$.
\end{proof}

Under Assumption~\ref{ass:finite-id}, \Cref{thm:fu} implies that any feasible factorization of $P_o^*$ is unique up to permutation and diagonal scaling. Concretely, there exist a permutation matrix $\Sigma_o$ and an invertible diagonal matrix $D_o$ such that
    \[
    T_o = T_o^* \Sigma_o D_o,
    \qquad
    \Pi_o = D_o^{-1}\Sigma_o^\top \Pi_o^*.
    \]
    By Lemma~\ref{lem:no-scaling}, column-stochasticity forces $D_o = I_k$. Therefore
    \[
    T_o = T_o^* \Sigma_o,
    \qquad
    \Pi_o = \Sigma_o^\top \Pi_o^*,
    \]
    as claimed.

\subsection[d]{Proof of \Cref{thm:continuous-ident-direct-correct}}

\begin{lemma}[Correct determinant bound]
\label{lem:det-bound-cont-correct}
Let $\Pi^* \in \mathbb R_+^{k\times m}$ have rank $k$, and define
\[
K \coloneqq \operatorname{cone}(\Pi^*) \subset \mathbb R^k.
\]
Assume
\[
\mathcal C_k \subseteq K,
\qquad
K^* \cap \operatorname{bd}(\mathcal C_k^*)
=
\{\lambda e_j : j\in[k],\ \lambda \ge 0\}.
\]
Let $A \in \mathrm{GL}(k)$ satisfy
\[
A\Pi^* \ge 0,
\qquad
A^\top \mathbbm 1 = \mathbbm 1.
\]
Then
\[
|\det(A)| \le 1.
\]
Moreover, equality holds if and only if $A$ is a permutation matrix.
\end{lemma}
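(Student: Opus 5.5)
We follow the argument of \citet{fu2018identifiability}: relate the rows of $A$ to the dual cone $K^*$, then use the second-order-cone containment to bound each row norm. Write $a_i^\top$ for the $i$-th row of $A$.

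\textbf{Step 1: rows lie in the dual cone.} The condition $A\Pi^*\ge 0$ says $a_i^\top \pi \ge 0$ for every column $\pi$ of $\Pi^*$. Hence $a_i^\top x\ge 0$ for all $x\in K=\operatorname{cone}(\Pi^*)$, i.e.\ $a_i\in K^*$. Since $\mathcal C_k\subseteq K$, dualizing reverses the inclusion, so $K^*\subseteq \mathcal C_k^*$. We use the standard fact that $\mathcal C_k^*=\{x:\mathbbm 1^\top x\ge\|x\|_2\}$ is indeed the dual of $\mathcal C_k$, which is why the paper introduced both cones. It follows that $\|a_i\|_2\le \mathbbm 1^\top a_i$ for every $i$.

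\textbf{Step 2: Hadamard plus AM--GM.} Hadamard's inequality gives
\[
|\det A|\le \prod_i\|a_i\|_2\le \prod_i \mathbbm 1^\top a_i .
\]
The factors $\mathbbm 1^\top a_i$ are nonnegative by Step 1. Their sum is $\mathbbm 1^\top A\mathbbm 1=(A^\top\mathbbm 1)^\top\mathbbm 1=\mathbbm 1^\top\mathbbm 1=k$. By AM--GM the product is therefore at most $(k/k)^k=1$, which proves $|\det A|\le 1$.

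\textbf{Step 3: the equality case.} This is the main obstacle. If $|\det A|=1$, every inequality in the chain must be tight.
\begin{itemize}
\item Tightness of AM--GM forces $\mathbbm 1^\top a_i=1$ for all $i$.
\item Tightness of the second inequality, together with $\|a_i\|_2\le\mathbbm 1^\top a_i$ and $\det A\ne 0$ (so each $a_i\ne0$), forces $\|a_i\|_2=\mathbbm 1^\top a_i$. Thus $a_i\in K^*\cap\operatorname{bd}(\mathcal C_k^*)$.
\item The scattering assumption then gives $a_i=\lambda_i e_{j(i)}$, and $\mathbbm 1^\top a_i=1$ gives $\lambda_i=1$.
\item Invertibility of $A$ forces $j$ to be a bijection, so $A$ is a permutation matrix.
\end{itemize}
Conversely, a permutation matrix has $|\det|=1$, satisfies $A\Pi^*\ge0$, and satisfies $A^\top\mathbbm 1=\mathbbm 1$. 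The delicate point is to justify $\operatorname{bd}(\mathcal C_k^*)=\{x:\mathbbm 1^\top x=\|x\|_2\}$, so that the boundary condition can be applied verbatim; the rank-$k$ hypothesis is used only to make $K$ full-dimensional.
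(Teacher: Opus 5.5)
Your proposal is correct and follows the paper's proof step for step: you place the rows of $A$ in $K^*\subseteq\mathcal C_k^*$, apply Hadamard's inequality and then AM--GM with $\mathbbm 1^\top A\mathbbm 1=k$, and in the equality case use tightness to force each row into $K^*\cap\operatorname{bd}(\mathcal C_k^*)$, hence to be a standard basis vector, hence $A$ to be a permutation matrix by invertibility. The two details you flag are also used implicitly by the paper, and both hold for $k\ge 2$: that the paper's $\mathcal C_k^*$ is really the dual of $\mathcal C_k$, and that $\operatorname{bd}(\mathcal C_k^*)=\{x:\mathbbm 1^\top x=\|x\|_2\}$.
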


\begin{proof}
Let $r_1^\top,\dots,r_k^\top$ denote the rows of $A$.

Since $A\Pi^* \ge 0$, for each $i$ we have
\[
r_i^\top \Pi^* \ge 0.
\]
Equivalently, for every column $\pi_j^*$ of $\Pi^*$,
\[
r_i^\top \pi_j^* \ge 0.
\]
Hence each row vector $r_i$ lies in the dual cone $K^*$.

Because $\mathcal C_k \subseteq K$, duality of cones gives
\[
K^* \subseteq \mathcal C_k^*
=
\{x \in \mathbb R^k : \mathbbm 1^\top x \ge \|x\|_2\}.
\]
Therefore each row satisfies
\[
\|r_i\|_2 \le \mathbbm 1^\top r_i.
\]

By Hadamard's inequality applied to the rows of $A$,
\[
|\det(A)| \le \prod_{i=1}^k \|r_i\|_2
\le
\prod_{i=1}^k \mathbbm 1^\top r_i.
\]
Now
\[
\mathbbm 1^\top r_i = (A\mathbbm 1)_i,
\]
so
\[
\prod_{i=1}^k \mathbbm 1^\top r_i
=
\prod_{i=1}^k (A\mathbbm 1)_i.
\]
Also,
\[
\sum_{i=1}^k (A\mathbbm 1)_i
=
\mathbbm 1^\top A \mathbbm 1
=
(A^\top \mathbbm 1)^\top \mathbbm 1
=
\mathbbm 1^\top \mathbbm 1
=
k.
\]
Since each $(A\mathbbm 1)_i \ge \|r_i\|_2 \ge 0$, AM--GM yields
\[
\prod_{i=1}^k (A\mathbbm 1)_i \le 1.
\]
Hence
\[
|\det(A)| \le 1.
\] 
Now suppose equality holds. Then equality must hold in:
\begin{enumerate}[label=(\roman*)]
    \item Hadamard's inequality, so the rows $r_1,\dots,r_k$ are pairwise orthogonal;
    \item the bound $\|r_i\|_2 \le \mathbbm 1^\top r_i$, so each $r_i \in \operatorname{bd}(\mathcal C_k^*)$;
    \item AM--GM, so each $(A\mathbbm 1)_i = 1$, i.e. each row sum equals $1$.
\end{enumerate}
Thus each row $r_i$ lies in
\[
K^* \cap \operatorname{bd}(\mathcal C_k^*),
\]
so by assumption,
\[
r_i = \lambda_i e_{\ell_i}
\]
for some $\lambda_i \ge 0$ and $\ell_i \in [k]$.
Since the row sum is $1$, we get $\lambda_i = 1$.
Thus every row is a standard basis vector.
Because $A$ is invertible, these basis vectors must be distinct, so $A$ is a permutation matrix.

Conversely, any permutation matrix satisfies the assumptions and has determinant of absolute value $1$.
\end{proof}

\begin{proof}[Proof of \Cref{thm:continuous-ident-direct-correct}]
Fix $o \in \mathcal O$ and suppress the dependence on $o$ in the notation. Write
\[
t_a \coloneqq t_a^o,
\qquad
p_e \coloneqq p_e^o,
\qquad
\Pi^* \coloneqq \Pi_o^*.
\]
Let $(\bar t_1,\dots,\bar t_k,\Pi)$ be any feasible solution of
\eqref{eq:continuous-minvol-measure}.

Define the embedded latent and observable elements
\[
u_a \coloneqq \Phi(t_a) \in \mathcal H,
\qquad
\bar u_a \coloneqq \Phi(\bar t_a) \in \mathcal H,
\qquad
v_e \coloneqq \Phi(p_e) \in \mathcal H,
\]
and collect them into linear maps
\[
T^* = \begin{bmatrix} u_1 & \cdots & u_k \end{bmatrix},
\qquad
\bar T = \begin{bmatrix} \bar u_1 & \cdots & \bar u_k \end{bmatrix},
\qquad
P = \begin{bmatrix} v_1 & \cdots & v_m \end{bmatrix}.
\]
By the ground-truth mixture identity, feasibility, and linearity of $\Phi$,
\[
P = \bar T \Pi = T^* \Pi^*.
\]

By \Cref{ass:cont-id}(i),
\[
\rank\!\left(G_\Phi(t_1,\dots,t_k)\right)=k,
\]
so the Gram matrix $G_\Phi(t_1,\dots,t_k)$ is positive definite. Hence
$u_1,\dots,u_k$ are linearly independent, and therefore
\[
\dim \operatorname{span}\{u_1,\dots,u_k\} = k.
\]
Moreover, since $\Pi^*$ is sufficiently scattered, the cone
\[
K \coloneqq \operatorname{cone}(\Pi^*)
\]
contains $\mathcal C_k$, which is full-dimensional in $\mathbb R^k$. Hence $K$
is full-dimensional, so $\rank(\Pi^*)=k$.

Therefore the columns of $\Pi^*$ span $\mathbb R^k$, and thus
\[
\operatorname{span}\{v_1,\dots,v_m\}
=
T^*(\mathbb R^k)
=
\operatorname{span}\{u_1,\dots,u_k\}.
\]
In particular,
\[
\dim \operatorname{span}\{v_1,\dots,v_m\}=k.
\]
On the other hand, since $P=\bar T\Pi$, we have
\[
\operatorname{span}\{v_1,\dots,v_m\}
\subseteq
\operatorname{span}\{\bar u_1,\dots,\bar u_k\}.
\]
The left-hand side has dimension $k$, while the right-hand side is spanned by at
most $k$ vectors, so in fact
\[
\operatorname{span}\{\bar u_1,\dots,\bar u_k\}
=
\operatorname{span}\{u_1,\dots,u_k\}.
\]
Therefore there exists a unique invertible matrix $S \in \mathrm{GL}(k)$ such that
\[
\bar T = T^* S.
\]
Substituting into $P=\bar T\Pi=T^*\Pi^*$ gives
\[
T^* S \Pi = T^* \Pi^*.
\]
Since the columns of $T^*$ are linearly independent, $T^*$ is injective on
$\mathbb R^k$, and hence
\[
S\Pi = \Pi^*,
\qquad\text{so}\qquad
\Pi = S^{-1}\Pi^*.
\]
Let
\[
A \coloneqq S^{-1}.
\]
Then
\[
A\Pi^* = \Pi \ge 0.
\]
Next, because $\Phi$ is linear and injective, the identity $\bar T=T^*S$ implies
that for each $a \in [k]$,
\[
\bar t_a = \sum_{b=1}^k S_{ba}\, t_b
\]
as finite signed measures in $\mathcal M(\mathcal O)$. Since both $\bar t_a$ and
each $t_b$ are probability measures, taking total mass yields
\[
1
=
\bar t_a(\mathcal O)
=
\sum_{b=1}^k S_{ba}\, t_b(\mathcal O)
=
\sum_{b=1}^k S_{ba}.
\]
Thus every column of $S$ sums to one, i.e.
\[
\mathbbm 1^\top S = \mathbbm 1^\top.
\]
Equivalently,
\[
A^\top \mathbbm 1 = \mathbbm 1.
\]
We may therefore apply \Cref{lem:det-bound-cont-correct} to conclude that
\[
|\det(A)| \le 1,
\qquad\text{hence}\qquad
|\det(S)| \ge 1,
\]
with equality if and only if $S$ is a permutation matrix.

Finally, the embedded Gram matrix transforms as
\[
G_\Phi(\bar t_1,\dots,\bar t_k)
=
S^\top G_\Phi(t_1,\dots,t_k) S,
\]
and therefore
\[
\det\big(G_\Phi(\bar t_1,\dots,\bar t_k)\big)
=
\det\big(G_\Phi(t_1,\dots,t_k)\big)\,\det(S)^2.
\]
Since $|\det(S)| \ge 1$, we obtain
\[
\det\big(G_\Phi(\bar t_1,\dots,\bar t_k)\big)
\ge
\det\big(G_\Phi(t_1,\dots,t_k)\big).
\]
The ground-truth factorization $(t_1,\dots,t_k,\Pi^*)$ is feasible, so it achieves
the value
\[
\det\big(G_\Phi(t_1,\dots,t_k)\big).
\]
Hence every optimizer must satisfy $|\det(S)|=1$. By
\Cref{lem:det-bound-cont-correct}, this happens if and only if $S$ is a
permutation matrix. Therefore there exists a permutation matrix $\Sigma$ such that
\[
\bar T = T^* \Sigma,
\qquad
\Pi = \Sigma^\top \Pi^*.
\]
Because $\Phi$ is injective, $\bar T = T^* \Sigma$ implies
\[
\bar t_a = t_{\Sigma(a)},
\qquad a \in [k].
\]
Restoring the dependence on $o$ gives exactly the claimed conclusion:
\[
\bar t_a = t^o_{\Sigma(a)},
\qquad a \in [k],
\qquad\text{and}\qquad
\Pi_o = \Sigma_o^\top \Pi_o^*.
\]
\end{proof}

\subsection[f]{Proof of \Cref{thm:global-perm}}
Fix $o_0 \in \mathcal O$, and let $\Sigma_0 \coloneqq \Sigma(o_0)$. Write $\sigma_0$ for the corresponding permutation of $[k]$, so that
\[
\phi_a(o_0) = \phi_{\sigma_0(a)}^*(o_0)
\qquad
\text{for all } a \in [k].
\]
Because the true embedded latent transitions at $o_0$ are pairwise distinct, the quantity
\[
\delta_0
\coloneqq
\min_{i\neq j}
\big\|
\phi_i^*(o_0)-\phi_j^*(o_0)
\big\|_{\mathcal H}
\]
is strictly positive.

By continuity of $o \mapsto \phi_a(o)$ and $o \mapsto \phi_a^*(o)$, there exists a neighborhood $U$ of $o_0$ such that for all $o \in U$ and all $a \in [k]$,
\[
\|\phi_a(o)-\phi_a(o_0)\|_{\mathcal H} < \delta_0/4,
\qquad
\|\phi_a^*(o)-\phi_a^*(o_0)\|_{\mathcal H} < \delta_0/4.
\]
Fix $o \in U$. For each $a \in [k]$, we then have
\[
\|\phi_a(o)-\phi_{\sigma_0(a)}^*(o)\|_{\mathcal H}
\le
\|\phi_a(o)-\phi_a(o_0)\|_{\mathcal H}
+
\|\phi_{\sigma_0(a)}^*(o_0)-\phi_{\sigma_0(a)}^*(o)\|_{\mathcal H}
< \delta_0/2.
\]
On the other hand, for any $b \neq \sigma_0(a)$,
\begin{align*}
\|\phi_a(o)-\phi_b^*(o)\|_{\mathcal H}
&\ge
\|\phi_{\sigma_0(a)}^*(o_0)-\phi_b^*(o_0)\|_{\mathcal H}
-
\|\phi_a(o)-\phi_a(o_0)\|_{\mathcal H}
-
\|\phi_b^*(o)-\phi_b^*(o_0)\|_{\mathcal H} \\
&>
\delta_0 - \delta_0/4 - \delta_0/4
=
\delta_0/2.
\end{align*}
Thus, for every $a \in [k]$, the vector $\phi_a(o)$ is strictly closer than $\delta_0/2$ to $\phi_{\sigma_0(a)}^*(o)$ and strictly farther than $\delta_0/2$ from every other true component $\phi_b^*(o)$. Therefore the only permutation relating $T(o)$ and $T^*(o)$ is $\Sigma_0$. Hence
\[
\Sigma(o)=\Sigma_0
\qquad
\text{for all } o \in U.
\]
So $\Sigma(\cdot)$ is locally constant.

Since $\Sigma(o)$ takes values in the finite discrete set of $k\times k$ permutation matrices, any locally constant map is constant on each connected component of $\mathcal O$. Because $\mathcal O$ is connected, $\Sigma(o)$ must be constant on all of $\mathcal O$.

\subsection[p]{Proof of \Cref{cor:main}}
Fix any $o \in \mathcal O$. By \Cref{ass:main}~\ref{ass:main:ss}--\ref{ass:main:embed}, Theorem~\ref{thm:continuous-ident-direct-correct} applies at that observation. Hence there exists a permutation matrix $\Sigma(o)$ such that
\[
p(\cdot \mid o,a) = p^*(\cdot \mid o,\Sigma(o)(a)),
\qquad
\Pi(o) = \Sigma(o)^\top \Pi^*(o).
\]
Because $\Phi$ is injective, this is equivalent to
\[
T(o) = T^*(o)\Sigma(o),
\qquad
\Pi(o) = \Sigma(o)^\top \Pi^*(o),
\]
where
\[
T(o)=\begin{bmatrix}
\Phi(p(\cdot \mid o,1)) & \cdots & \Phi(p(\cdot \mid o,k))
\end{bmatrix},
\qquad
T^*(o)=\begin{bmatrix}
\Phi(p^*(\cdot \mid o,1)) & \cdots & \Phi(p^*(\cdot \mid o,k))
\end{bmatrix}.
\]
By \Cref{ass:main}~\ref{ass:main:cont} and the continuity assumed for the candidate family, the maps $o \mapsto T(o)$ and $o \mapsto T^*(o)$ are continuous. Moreover, \Cref{ass:main}~\ref{ass:main:embed} implies that the embedded true latent transitions are linearly independent at every $o$, and in particular pairwise distinct. Therefore the hypotheses of \Cref{thm:global-perm} are satisfied, so there exists a single permutation matrix $\Sigma$ such that
\[
p(\cdot \mid o,a) = p^*(\cdot \mid o,\Sigma(a)),
\qquad
\Pi(o) = \Sigma^\top \Pi^*(o)
\qquad
\text{for all } o \in \mathcal O.
\]

Finally, by \Cref{ass:main}~\ref{ass:main:anchor}, the permutation is known on the anchor subset $S$. Since the same permutation $\Sigma$ applies globally, it is uniquely determined there and hence uniquely determined everywhere. If the anchor identifies the true labeling, then $\Sigma = I_k$, yielding
\[
p(\cdot \mid o,a) = p^*(\cdot \mid o,a),
\qquad
\Pi(o) = \Pi^*(o)
\qquad
\text{for all } o \in \mathcal O,\ a \in [k].
\]
This proves the claim.

\end{document}